\newif\ifieee
\newcounter{IEEE@bibentries}
\renewcommand\IEEEtriggeratref[1]{%
  \renewbibmacro{finentry}{%
    \stepcounter{IEEE@bibentries}%
    \ifthenelse{\equal{\value{IEEE@bibentries}}{#1}}
    {\finentry\newline}
    {\finentry}%
  }%
}
\definecolor{darkblue}{HTML}{0000AD}
\definecolor{darkgreen}{HTML}{008600}
\definecolor{darkred}{HTML}{8B0000}
\definecolor{darkgray}{HTML}{666666}
\definecolor{_mage}{HTML}{912830}
\definecolor{_cyan}{HTML}{31837a}
\definecolor{_purp}{HTML}{49425c}
\newcommand{\fig}[1]{{Fig.~\ref{fig:#1}}}
\newcommand{\sct}[1]{{Sec.~\ref{sec:#1}}}
\newcommand{\see}[1]{(see~#1)}
\newcommand{\set}[1]{\left\{ #1 \right\}}
\newcommand{\paren}[1]{\left( #1 \right)}
\newcommand{\brak}[1]{\left[ #1 \right]}
\newcommand{\abs}[1]{\left| #1 \right|}
\newcommand{\ip}[2]{\langle #1,#2 \rangle}
\newcommand{\mat}[2]{\brak{\begin{array}{#1} #2 \end{array}}}
\newcommand{\td}[1]{\widetilde{#1}}
\newcommand{\eqnn}[1]{\begin{equation}\begin{aligned} #1 \end{aligned}\end{equation}}
\newcommand{\into}{\rightarrow}
\newcommand{\goesto}{\rightarrow}
\newcommand{\tr}{\top}
\newcommand{\I}{I_d}
\newcommand{\II}{I_{2d}}
\newcommand{\R}{\mathbb{R}}
\newcommand{\N}{\mathbb{N}}
\newcommand{\e}{\mathscr}
\newcommand{\q}{q}
\newcommand{\dq}{\dot{q}}
\newcommand{\p}{p}
\newcommand{\w}{\omega}
\newcommand{\vphi}{\varphi}
\newcommand{\id}{\I}
\newcommand{\diag}{\operatorname{diag}}
\newtheorem{proposition}{Proposition}
\newtheorem{definition}{Definition}
\newtheorem{theorem}{Theorem}
\newtheorem{corollary}{Corollary}
\newtheorem{lemma}{Lemma}
\newtheorem{claim}{Claim}
\newtheorem{assumption}{Assumption}
\newtheorem{remark}{Remark}
\newtheorem{example}{Example}
\newcommand{\defna}[2]{\begin{definition}[#1] #2 \end{definition}}
\newcommand{\rem}[1]{\begin{remark} #1 \end{remark}}
\newcommand{\remna}[2]{\begin{remark}[#1] #2 \end{remark}}
\newcommand{\asmpna}[2]{\begin{assumption}[#1] #2 \end{assumption}}
\newcommand{\asmpdiff}{Assump.~\ref{asmp:diff} (differentiable vector field and reset map)}
\newcommand{\asmpdecoupled}{Assump.~\ref{asmp:decoupled} (limbs decoupled through body)}
\newcommand{\asmptrj}{Assump.~\ref{asmp:trj} (admissible trajectories)}
\newcommand{\defcontact}{Def.~\ref{def:contact} (contact modes)}
\newcommand{\defadact}{Def.~\ref{def:adact} (admissible constraint activation/deactivation)}
\newcommand{\defseq}{Def.~\ref{def:seq} (contact mode sequence)}
\newcommand{\thmdiff}{Thm.~\ref{thm:diff} (differentiability through intermittent contact)}
\title{\LARGE \bf Decoupled limbs yield differentiable trajectory outcomes through intermittent contact in locomotion and manipulation}
\author{
Andrew~M.~Pace%
\and Samuel~A.~Burden%
\thanks{Department of Electrical Engineering, University of Washington, Seattle, WA, USA ({\tt apace2,sburden@uw.edu}). 
This material is based upon work supported by 
the U. S. Army Research Laboratory and the U. S. Army Research Office under contract/grant number W911NF-16-1-0158.
}
}
\date{}
\begin{document}

\maketitle
\thispagestyle{empty}
\pagestyle{empty}


\begin{abstract}
When limbs are decoupled, we find that 
trajectory outcomes in mechanical systems subject to unilateral constraints
vary differentiably with respect to initial conditions,
even as the contact mode sequence varies.
\end{abstract}


\section{Introduction}
\label{sec:intro}

Locomotion with legs entails intermittent contact with terrain;
manipulation with digits entails intermittent contact with objects.
Since legged locomotion \emph{is} self--manipulation~\cite{JohnsonKoditschek2013, JohnsonBurden2016ijrr}, mathematical models for intermittent contact between limbs and environments apply equally well to both classes of behaviors.
Parsimonious models for the dynamics of intermittent contact are piecewise-defined, with transitions between contact modes summarized by abrupt changes in system velocities.
Such models are \emph{hybrid} dynamical systems whose state evolution is governed by continuous-time \emph{flow} (generated by a vector field) punctuated by discrete-time \emph{reset} (specified by a map).
Trajectory outcomes are the resulting state of the system after \emph{flowing} and undergoing necessary 
resets for a specified period of time.
Trajectory outcomes in hybrid systems generally vary discontinuously as the discrete mode sequence varies as in~\fig{ex} (\emph{left}).
The point of this paper is to provide sufficient conditions that ensure trajectories in mechanical systems subject to unilateral constraints vary (continuously and) differentiably through intermittent contact, even as the contact mode sequence varies as in~\fig{ex} (\emph{right}).
Since scalable algorithms for optimization~\cite{Polak1997} and learning~\cite{SuttonBarto1998} rely on differentiability, conditions ensuring existence 
of derivatives are of practical importance in robotic locomotion and manipulation.

\subsection{Organization}
\label{sec:org}
We begin in~\sct{mdl} by specifying the class of dynamical systems under consideration, namely, \emph{mechanical} systems subject to \emph{unilateral} constraints.
\sct{asmp} imposes conditions on the system dynamics and trajectories that enable us in 
\sct{diff} to report that trajectories vary differentiably with respect to initial conditions, even as the contact mode sequence varies.

\subsection{Relation to prior work}
\label{sec:prior}
The technical content in~\sct{mdl} and~\sct{asmp} appeared previously in the literature and is (more--or--less) well--known; we collate the results here to contextualize and streamline our contributions in~\sct{diff}.

\begin{figure}[ht]
\centering
{
\iftoggle{ieee}
{
\hfill%
{
\def\svgwidth{0.5\columnwidth} 
\resizebox{.45\columnwidth}{!}{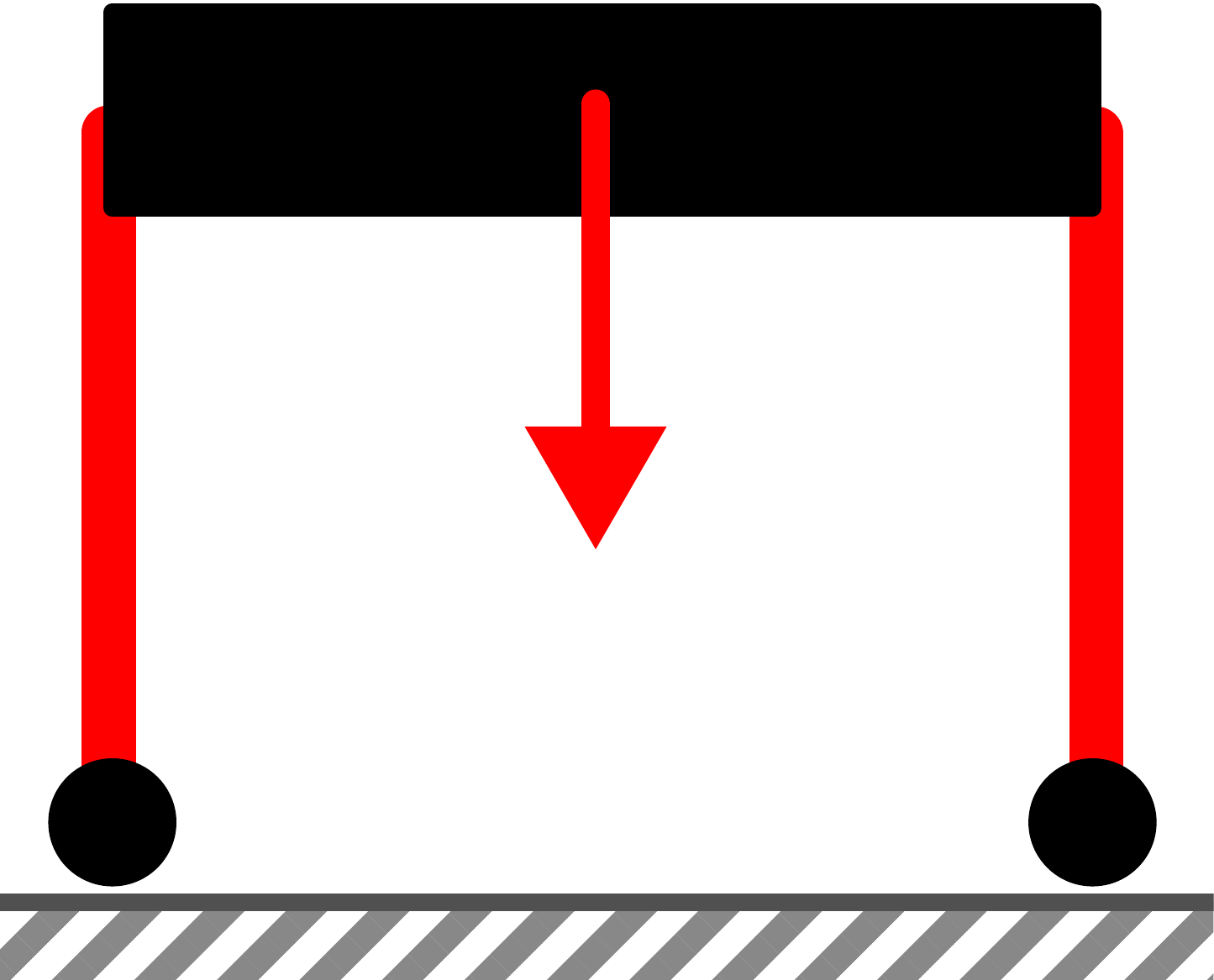}
}
\hfill%
{
\def\svgwidth{0.5\columnwidth} 
\resizebox{.45\columnwidth}{!}{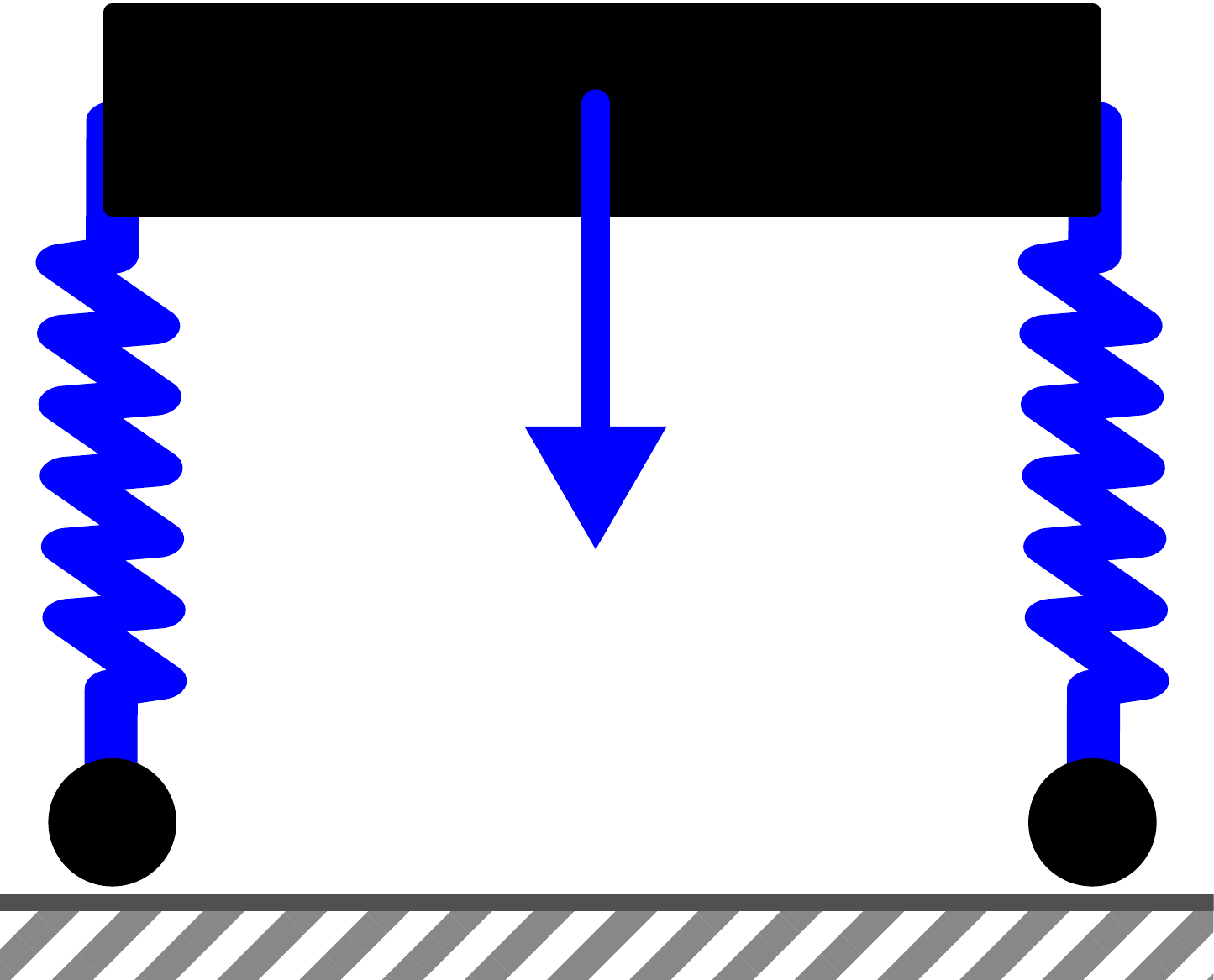}
}
\includegraphics[width=\columnwidth]{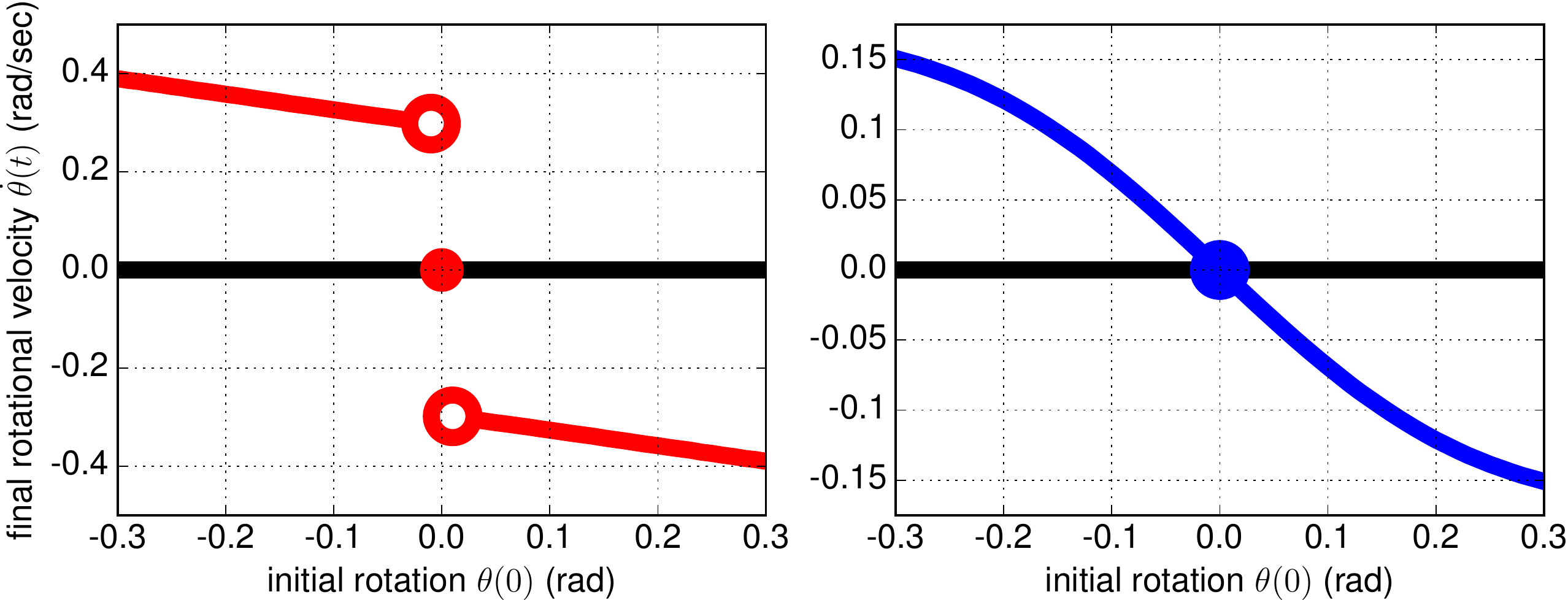}
}
{
\quad\includegraphics[width=.4\columnwidth]{trot_td_stiff.pdf}\quad\quad
\includegraphics[width=.4\columnwidth]{trot_td_soft.pdf}\\
\includegraphics[width=.9\columnwidth]{code/salt.pdf}
}
\caption{
\label{fig:ex}
Trajectory outcomes after flowing for a uniform time from the initial conditions away from impacts in 
mechanical systems subject to unilateral constraints.
(\emph{left}) 
In general, trajectory outcomes depend discontinuously on initial conditions.
In the pictured model for rigid--leg trotting (adapted from~\cite{RemyBuffinton2010}), discontinuities arise when two legs touch down:
if the legs impact simultaneously (corresponding to rotation $\theta(0) = 0$), then the post--impact rotational velocity is zero;
if the rear leg impacts before the front leg ($\theta(0) > 0$) or vice--versa ($\theta(0) < 0$), 
then the post--impact rotational velocities are bounded away from zero.
(\emph{right})
When limbs are decoupled (e.g. through viscoelasticity), trajectory outcomes depend continuously on initial conditions.
In the pictured model for soft--leg trotting (adapted from~\cite{BurdenGonzalezVasudevan2015tac}), 
trajectory outcomes (solid lines) are continuous and differentiable.
These figures were generated using simulations of the depicted models.
}
}
\end{figure}

\section{Mechanical systems subject to unilateral constraints}
\label{sec:mdl}

In this paper, we study the dynamics of a mechanical system with configuration coordinates $q\in Q=\R^d$ subject to 
unilateral constraints $a(q) \ge 0$ 
specified by a differentiable function %
$a : Q\into \R^n$
where $d,n\in\N$ are finite.
We are primarily interested in systems with $n > 1$ constraints,  
whence we regard the inequality $a(q) \ge 0$ as being enforced componentwise.
Given any $J\subset\set{1,\dots,n}$, 
and letting $\abs{J}$ denote the number of elements in the set $J$,
we let 
$a_J : Q \into \R^{\abs{J}}$ 
denote the function obtained by selecting the component functions of $a$ indexed by $J$, 
and we regard the equality $a_J(q) = 0$ as being enforced componentwise.
It is well--known~\see{e.g.~\cite[Sec.~3]{Ballard2000} or \cite[Sec.~2.4,~2.5]{JohnsonBurden2016ijrr}}
that with
$J = \set{j\in\set{1,\dots,n} : a_j(q) = 0}$ 
the system's dynamics take the form
\begin{subequations}\label{eq:dyn}
\begin{align}
  M(q)\ddot{q} & = f(q,\dot{q}) + c(q,\dot{q})\dot{q} + Da_J(q)^\tr \lambda_J(q,\dot{q}),\label{eq:dyn:cont}\\
  \dot{q}^+ & = \Delta_J(q,\dot{q}^-),\label{eq:dyn:disc}
\end{align}
\end{subequations}
where
$M : Q \into \R^{d\times d}$
specifies the mass matrix for the mechanical system in the $q$ coordinates,
$f : TQ \into \R^d$ 
is termed the \emph{effort map}~\cite{Ballard2000}
and specifies%
\footnote{We let $TQ = \R^d\times\R^d$ denote the \emph{tangent bundle} of the configuration space $Q$; an element $(q,\dot{q})\in TQ$ can be regarded as a pair containing a vector of generalized configurations $q\in\R^d$ and velocities $\dot{q}\in\R^d$; we write $\dot{q}\in T_q Q$.}
the internal and applied
forces, 
$c : TQ \into \R^{d\times d}$ 
denotes the \emph{Coriolis matrix} 
determined%
\footnote{For each $\ell,m\in\set{1,\dots,d}$
the $(\ell,m)$ entry $c_{\ell m}$ is determined 
from the entries of $M$ by the formula \\
$c_{\ell m} = -\frac{1}{2} \sum_{k=1}^d \paren{ D_k M_{\ell m} + D_m M_{\ell k} - D_\ell M_{k m}}$~\cite[Eqn.~30]{JohnsonKoditschek2013}.
}
by $M$,
$Da_J : Q \into \R^{\abs{J}\times d}$ 
denotes the (Jacobian) derivative of the constraint function $a_J$ with respect to the coordinates,
$\lambda_J : TQ \into \R^{\abs{J}}$ 
denotes the reaction forces generated in contact mode $J$ to enforce the constraint $a_J(q) \ge 0$,
\eqnn{
\lambda_J(q) = \paren{Da_J(q) M(q)^{-1} Da_J(q)^\tr}^{-1},
}
$\Delta_J : TQ \into \R^{d\times d}$ 
specifies the collision restitution law that instantaneously resets velocities to ensure compatibility with the constraint $a_J(q) = 0$,
\eqnn{
\dot{q}^+ = \Delta_J(q,\dot{q}^-) = \I - (1+\gamma(q,\dot{q}^-)) P_J(q)\dot{q}^-,
}
where 
$\I$ is the $(d\times d)$ identity matrix,
$\gamma : TQ\into[0,\infty)$ specifies the \emph{coefficient of restitution},
$P_J:Q\into \R^{d\times d}$ is the projection onto the constraint surface,
\eqnn{
P_J = M^{-1} Da_J^\tr \paren{Da_J M^{-1} Da_J^\tr}^{-1} Da_J,
}
and
$\dot{q}^+$ (resp. $\dot{q}^-$) denotes the right-- (resp. left--)handed limits of the velocity vector with respect to time.

\defna{contact modes}{
\label{def:contact}
The constraint functions $\set{a_j}_{j=1}^n$
partition the set of \emph{admissible} configurations 
$A = \set{q\in Q : a(q) \ge 0}$
into a finite collection%
\footnote{We let $2^n = \set{J \subset \set{1,\dots,n}}$ denote the \emph{power set} (i.e. the set containing all subsets) of $\set{1,\dots,n}$.}
$\set{A_J}_{J\in 2^n}$ 
of \emph{contact modes}:
\eqnn{
\forall J\in 2^n : A_J = \left\{q\in Q \mid \right. & a_J(q) = 0,\\ 
&\left. \forall i\not\in J : a_i(q) > 0\right\}.
}
For each $J\in 2^n$: 
we let $TA = \set{(q,\dot{q})\in TQ : q\in A}$ and $TA_J = \set{(q,\dot{q})\in TQ : q\in A_J}$;
if $q\in A_J$ then we say constraints in $J$ are \emph{active} at $q$.
}

\rem{
\label{rem:contact}
In~{\defcontact}, 
$J = \set{1,\dots,n}$ indexes the maximally constrained contact mode and 
$J = \emptyset$ indexes the unconstrained contact mode. 
}

\section{Assumptions}
\label{sec:asmp}
The point of this paper is to 
provide conditions that ensure trajectories of~\eqref{eq:dyn} vary differentiably as the contact mode sequence%
\footnote{See \defseq.}
varies.
Without imposing additional conditions,
the seemingly benign equations in~\eqref{eq:dyn} 
admit a range of dynamical phenomena that 
preclude differentiability.
This section contains the conditions that will enable us to obtain differentiable trajectory outcomes in~\sct{diff}.

\subsection{Existence and uniqueness of trajectories}

In the present paper, we will assume that appropriate conditions have been imposed to ensure trajectories of~\eqref{eq:dyn} exist on a region of interest in time and state.

\asmpna{existence and uniqueness}{
\label{asmp:flow}
There exists a \emph{flow} for~\eqref{eq:dyn},
that is,
a function $\phi:\e{F}\into TA$
where 
$\e{F}\subset[0,\infty)\times TA$ is an open subset (in the subspace topology)
containing $\set{0}\times TA$ 
and for each $(t,(q,\dot{q}))\in\e{F}$
the restriction
$\phi|_{[0,t]\times\set{(q,\dot{q})}}:[0,t]\into TQ$
is the unique left--continuous trajectory for~\eqref{eq:dyn}.
}

\rem{
The problem of ensuring trajectories of~\eqref{eq:dyn} exist and are unique has been studied extensively; 
we refer the reader to~\cite[Thm.~10]{Ballard2000} for a specific result, 
\cite[Thm.~5.3]{Brogliato2016} for a setup using constrained complementarity problems,
and~\cite{JohnsonBurden2016ijrr} 
for a general discussion of this problem.
}

\subsection{Differentiable vector field and reset map}
Since we are concerned with differentiability properties of the flow, 
we assume the elements in~\eqref{eq:dyn} are differentiable.

\asmpna{differentiable vector field and reset map}{
\label{asmp:diff}
The vector field~\eqref{eq:dyn:cont} and reset map~\eqref{eq:dyn:disc} are continuously differentiable.
}

\rem{
If we restricted our attention to the continuous--time dynamics in~\eqref{eq:dyn}, then Assump.~\ref{asmp:diff} would suffice to provide the local existence and uniqueness of trajectories imposed by Assump.~\ref{asmp:flow};
as illustrated by~\cite[Ex.~2]{Ballard2000},
Assump.~\ref{asmp:diff} is insufficient when the vector field~\eqref{eq:dyn:cont} is coupled to the reset map~\eqref{eq:dyn:disc}.
}

\subsection{Decoupled limbs}
\label{sec:cont}
Since continuity is necessary for differentiability, we must impose a condition that yields continuous outcomes for trajectories of~\eqref{eq:dyn}.
A general condition that is known%
\footnote{We refer to~\cite[Thm.~20]{Ballard2000} for a detailed exposition of this result.}
to provide continuity is that constraint surfaces intersect orthogonally relative to the mass matrix.
Formally,
\eqnn{\label{eq:orth}
\forall i,j\in\set{1,\dots,n},\ i\ne j,\ q\in a_i^{-1}(0)\cap a_j^{-1}(0) :\quad\\
Da_i(q) M(q)^{-1} Da_j(q)^\tr = 0.
}
Physically, this condition implies that any limb or body segments that can undergo impact simultaneously must be inertially decoupled.
Although this condition ensures trajectory outcomes are continuous~\cite[Thm.~20]{Ballard2000}, they generally remain nonsmooth~\cite[Thm.~1]{PaceBurden2017hscc}. 
Thus we introduce a stronger condition that entails decoupling limb forces through a body.

\asmpna{limbs decoupled through body}{
\label{asmp:decoupled}
The configuration decouples into $(n+1)$ segments, hence $2^n$ possible contact modes,  $q = \paren{q_j}_{j=0}^n \in Q = \prod_{j=0}^n Q_j$ where $Q_j = \R^{d_j}$
so that:
\begin{enumerate}
  \item the mass matrix is block diagonal,
$M(q) = \diag\paren{{M_j(q_j)}}_{j=0}^n$, 
where $M_j: Q_j \into \R^{(d_j\times d_j)}$;
\item for limb $j\in\set{1,\dots,n}$
the constraint $a_j$ only depends on $q_j$, $a_j:Q_j\into\R$, 
the coefficient of restitution $\gamma_j$ only depends on the limb states,
$\gamma_j:TQ_j\into\R$,
and
the effort $f_j$ only depends on the states of the limb and the body,
$f_j:TQ_0\times TQ_j\into\R^{d_j}$;
\item the effort $f_0$ applied to the body depends additively on the states of the limbs and the body,
$f_0 = g_0~+~\sum_{j=1}^n g_j$,
where 
for $j>0$, $g_j:TQ_0\times TQ_j\into\R^{d_0}$, and $g_0: TQ_0 \into \R^{d_0}$.
\end{enumerate}
}

\rem{
In the decoupled structure described in the preceding assumption,
the variable $q_0\in Q_0 = \R^{d_0}$ contains the ``body'' degrees--of--freedom, i.e. all coordinates that cannot undergo impact (and are not inertially coupled to those that can).
A limb may contain several links and as such have several bilateral constraints corresponding to it. For instance in \cite[Fig.~1(middle)]{KenneallyDe2016},
one limb contains four rigid bars.
Each limb can be coupled through forces with the body, but can only influence other limbs indirectly through the body.
Note that series compliance~\cite{SprowitzTuleu2013, OdhnerJentoft2014} and/or 
backdrivability~\cite{HyunSeok2014, KenneallyDe2016} 
contribute to inertial decoupling,
but conditions (1) and (2) of {\asmpdecoupled} require inertial decoupling in all degrees--of--freedom between 
body and limbs.
}

\remna{discontinuous outcomes in locomotion}{
The analysis of a saggital--plane quadruped in~\cite{RemyBuffinton2010} provides an instructive example of the behavioral consequences of coupling limbs in legged locomotion.
As summarized in~\cite[Sec~3.1]{RemyBuffinton2010}, the model possesses 3 distinct but nearby trot gaits, corresponding to whether two legs impact simultaneously 
or at distinct time instants%
;
the simultaneous--impact trot is unstable due to discontinuous dependence of trajectory outcomes on initial conditions.
}


\subsection{Differentiable constraint activation/deactivation times}

Trajectories of~\eqref{eq:dyn} are not continuous functions of time due to intermittent impacts that trigger the reset map~\eqref{eq:dyn:disc}.
However, it has been known for some time%
\footnote{%
The earliest instance of this result we found in the English literature is~\cite{AizermanGantmacher1958}.
Subsequently, many authors (ourselves included) have re--proven this result; a partial list includes~\cite{HiskensPai2000, GrizzleAbba2002, WendelAmes2012, BurdenRevzen2015tac}.
The result follows via a straightforward composition of smooth flows with smooth time--to--impact maps; we refer the interested reader to~\cite[App.~A1]{BurdenRevzen2015tac} for details.
}
that trajectory outcomes can nevertheless depend differentiably on initial conditions away from impact times, so long as the contact mode sequence is fixed.
For this result to hold, the time when constraints activate (or deactivate) must depend differentiability on initial conditions.
We now develop definitions used to state an \emph{admissibility} condition at the end of the section that yields differentiable time--to--activation (and time--to--deactivation).

\defna{admissible constraint activation/deactivation}{
\label{def:adact}
A trajectory initialized at $(q,\dot{q})\in TA_J\subset TQ$
\emph{activates constraints $I\in 2^n$ at time $t > 0$} if
(i) no constraint in $I$ was active immediately before time $t$
and
(ii) all constraints in $I$ become active at time $t$;
this activation is \emph{admissible} if the constraint velocity%
\footnote{Formally, the \emph{Lie derivative}~\cite[Prop.~12.32]{Lee2012} of the constraint along the vector field specified by~\eqref{eq:dyn:cont}.
}
for all activated constraints is negative.
Formally, with $(\rho,\dot{\rho}^-) = \lim_{s\goesto t^-}\phi(s,(q,\dot{q}))$ denoting the left--handed limit of the trajectory at time~$t$,
\eqnn{
\forall i\in I: D_t\brak{a_i\circ\phi}(0,(\rho,\dot{\rho}^-)) = Da_i(\rho)\dot{\rho}^- < 0.
}
Similarly, the trajectory 
\emph{deactivates constraints $I\in 2^n$ at time $t > 0$} if
(i) all constraints in $I$ were active at time $t$
and
(ii) no constraint in $I$ remains active immediately after time $t$;
this deactivation time is \emph{admissible} if, for all deactivated constraints: 
(i) the constraint velocity or constraint acceleration%
\footnote{Formally, the second Lie derivative of the constraint along the vector field specified by~\eqref{eq:dyn:cont}.}
is positive,
or
(ii) the time derivative of the contact force is negative.
Formally, with $(\rho,\dot{\rho}^+) = \lim_{s\goesto t^+}\phi(s,(q,\dot{q}))$ denoting the right--handed limit of the trajectory at time $t$,
for all $i\in I:$
\eqnn{\label{eq:deact}
\text{(i)}\ D_t\brak{a_i\circ\phi}(0,(\rho,\dot{\rho}^+)) &> 0\ \text{or}\\
D^2_t\brak{a_i\circ\phi}(0,(\rho,\dot{\rho}^+)) &> 0,\\ 
\text{or (ii)}\ D_t\brak{\lambda_i\circ\phi}(0,(\rho,\dot{\rho}^+)) &< 0. 
}
}

\rem{\label{rem:adact}
The conditions for admissible constraint deactivation in case (i) of~\eqref{eq:deact} can only arise at constraint activation times; otherwise the trajectory is continuous, whence active constraint velocities and accelerations are zero.
}


\defna{admissible trajectory}{
\label{def:trj}
The trajectory initialized at $(q,\dot{q})$ is \emph{admissible on $[0,t]\subset\R$} if 
(i) it has a finite number of constraint activation (hence, deactivation) times on $[0,t]$, 
and 
(ii) every constraint activation and deactivation is admissible;
otherwise the trajectory is \emph{inadmissible}.
}

\defna{contact mode sequence}{
\label{def:seq}
The \emph{contact mode sequence}%
\footnote{This definition differs from the \emph{word} of~\cite[Def.~4]{JohnsonBurden2016ijrr} in that a contact mode is included in the sequence only if nonzero time is spent in the mode; this definition is more closely related to the \emph{words} of~\cite[Eqn.~72]{BurdenSastry2016siads}} 
associated with an admissible trajectory $\phi^{(q,\dot{q})}$ on $[0,t]\subset\R$ that has $m$ activation and/or deactivation times $t_1, \dots, t_m$
is the unique function $\omega:\set{0,\dots,m}\into 2^n$ such that there exists a finite sequence of times $\set{t_\ell}_{\ell=0}^{m+1}\subset [0,t]$ for which $0 = t_0 < t_1 < \cdots < t_{m+1} = t$ and
\eqnn{
\forall \ell\in\set{0,\dots,m} : \phi((t_{\ell},t_{\ell+1}),(q,\dot{q})) \subset TA_{\omega(\ell)}.
}
}

\rem{
\label{rem:seq}
In~{\defseq}, the sequence $\omega$ is easily seen to be unique by the admissibility of the trajectory; indeed, the associated time sequence consists of start, stop, and constraint activation/deactivation times.
}

\asmpna{admissible trajectories}{
\label{asmp:trj}
The trajectory of~\eqref{eq:dyn} initialized at $(q,\dot{q})$ is admissible on $[0,t]$ for all $(t,(q,\dot{q}))\in\e{F}$.
}

\section{Differentiability through contact}
\label{sec:diff}

Under Assumptions~1--4 from~\sct{asmp}, previous work has shown that, when the contact mode sequence is fixed, trajectory outcomes vary continuously~\cite[Thm.~20]{Ballard2000} and differentiably~\cite{AizermanGantmacher1958} with respect to variations in initial conditions (i.e. initial states and parameters).
This enables the use of scalable algorithms for optimal control~\cite{Polak1997} and reinforcement learning~\cite{SuttonBarto1998} to improve the performance of a given behavior (corresponding to the fixed contact mode sequence) using gradient descent.
However, these algorithms cannot be relied upon to select among different behaviors (corresponding to different contact mode sequences) since trajectory outcomes are known to depend nonsmoothly on initial conditions~\cite[Thm.~1]{PaceBurden2017hscc}.
In this section we report that decoupled limbs yield classically differentiable trajectory outcomes \emph{even as the contact mode sequence varies},
enabling the use of scalable algorithms to select behaviors.

\begin{restatable}[differentiability through intermittent contact]{theorem}{diffcontact}
\label{thm:diff}
Under Assumptions 1--4 from~\sct{asmp},
if 
$t$ is not a constraint activation time for $(\q,\dot{\q})$,
then
the flow $\phi:\e{F}\into TA$ for~\eqref{eq:dyn}
is continuously differentiable 
at 
$(t,(\q,\dot{\q}))\in\e{F}$.
\end{restatable}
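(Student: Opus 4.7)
The plan is to distinguish two regimes near $(q,\dot{q})$. When the contact mode sequence is locally constant—nearby initial conditions trigger the same activations/deactivations in the same order—the classical argument of~\cite{AizermanGantmacher1958} (reviewed in~\cite[App.~A1]{BurdenRevzen2015tac}) applies: the trajectory outcome is a finite composition of smooth continuous-time flows and smooth reset maps from~\asmpdiff, interleaved with smooth time-to-event maps. The latter are smooth by the implicit function theorem, whose hypothesis is furnished by~\asmptrj\ through~\defadact: at each activation the Lie derivative $Da_i(\rho)\dot{\rho}^-$ is strictly negative, and at each deactivation either a second Lie derivative or a contact-force time derivative is nonzero. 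Because $t$ is not an activation time for the nominal trajectory, the same holds for all nearby perturbed trajectories.

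The substantive case is a \emph{simultaneous impact}: at some $t_k<t$, a set $I$ with $|I|\ge 2$ of constraints activates along the nominal trajectory, yet perturbed initial conditions may split $I$ into staggered single-limb impacts in any order. I would appeal to~\asmpdecoupled\ to show that every ordering yields the same first-order expansion of the flow. Two structural consequences of decoupling are crucial: (a) because $M$ is block diagonal and each $a_j$ depends only on $q_j$, the projection $P_{\{j\}}=M^{-1}Da_j^\tr(Da_j M^{-1}Da_j^\tr)^{-1}Da_j$ is supported in the $(q_j,q_j)$-block, so the single-limb reset $\Delta_{\{j\}}$ modifies only $\dot{q}_j$ and leaves $\dot{q}_0$ and the other $\dot{q}_i$ untouched; (b) for multi-limb $I$, the Gram matrix $Da_I M^{-1}Da_I^\tr$ is diagonal by the same support argument, so $P_I=\sum_{j\in I}P_{\{j\}}$ and $\Delta_I$ factors as a composition $\prod_{j\in I}\Delta_{\{j\}}$ whose single-limb factors pairwise commute.

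With this factorization, I would verify that the salt matrix at the simultaneous event equals the product of single-limb salt matrices, in any order. For a staggered ordering, the elapsed time between the first and last single-limb impact is $O(\|\delta\|)$ in the perturbation $\delta$; by~(a) the body and the as-yet-unimpacted limbs see no discrete jumps on this interval, and the continuous flow on it contributes an $O(\|\delta\|)$ term whose linearization is, thanks to~(b), independent of the ordering. Thus all ordering branches agree to first order at the simultaneous event, producing a classically differentiable piecewise expression. Patching together the resulting $C^1$ pieces between events, and invoking the first case on the terminal arc (using that $t$ is not an activation time), delivers continuous differentiability of $\phi$ at $(t,(q,\dot{q}))$.

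The main obstacle is the matching-of-derivatives step in the third paragraph: making precise that the product of single-limb salt matrices in any order agrees with the salt matrix attached to the simultaneous activation, and that the $O(\|\delta\|)$ interpolation between staggered orderings contributes nothing to first order beyond this product. Executing this rigorously hinges on the block structure enforced by~\asmpdecoupled\ so that the relevant commutators vanish; without decoupling the salt matrices for different orderings would genuinely differ, recovering the discontinuous behavior depicted in~\fig{ex}~(\emph{left}).
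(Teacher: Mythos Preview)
Your proposal is correct and takes essentially the same approach as the paper: establish piecewise-differentiability by composing smooth flows, resets, and implicit-function time-to-event maps, then exploit the block structure from \asmpdecoupled\ to show that the saltation-matrix product over any ordering of a simultaneous activation set coincides. The paper executes the step you flag as the ``main obstacle'' by proving three identities---$\widetilde{S}_j\,DR_{\{i\}}=\widetilde{S}_j$, $DR_{\{j\}}\,\widetilde{S}_i=\widetilde{S}_i$ for $i\neq j$, and $\widetilde{S}_i\,\widetilde{S}_j=0$---which collapse $\prod_\ell\bigl(DR_\ell+\widetilde{S}_\ell\bigr)$ to the order-free additive form $DR_{K_a}+\sum_{k\in K_a}\widetilde{S}_k$; this is a bit sharper than the ``commutators vanish'' heuristic you anticipate, but it is derived from exactly the block support argument you outline in (a)--(b).
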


\remna{proof sketch}{

\iftoggle{arxiv}
{
We provide an illustration of the result in~\fig{hds}, and a sketch of the proof strategy in what follows.
For the complete proof, see~Sec.~\ref{sec:proof}.
}
{
Due to space constraints, we relegate the formal proof of this result to a technical report~\cite[Thm.~1]{PaceBurden2016arxiv}.
In its stead, we provide an illustration of the result in~\fig{hds}, and a sketch of the proof strategy in what follows.
}
Given a contact mode sequence $\omega$ for a trajectory initialized near $(\q,\dot{\q})$, we construct a continuously differentiable ($C^1$) function $\phi_\omega$ defined on an open set containing $(t,(q,\dot{q}))$ by composing the sequence of flow--to--activation and flow--to--deactivation functions specified by $\omega$.
Without loss of generality, we only consider constraint activations.%
\footnote{Admissible constraint deactivations do not alter the flow to first order since the state and vector field are continuous during these transitions.}
Near $(\q,\dot{\q})$ in~\fig{hds}, there are two activation sequences, corresponding to whether constraint $1$ activates before constraint $2$ activates, or vice--versa.
For each $I\subset\set{1,2}$ we let $\phi_I$ denote the $C^1$ flow for~\eqref{eq:dyn:cont},%
\footnote{These flows are guaranteed to exist over an open subset of $\R\times TQ$ containing $\set{0}\times A_I$ by~{\asmpdiff}.}
and define the $C^1$ function $\Gamma_I(u,(\p,\dot{\p})) = (u,(\p,\Delta_I(\p)\dot{\p}))$.
By~{\asmptrj}, 
there exist $C^1$ time--to--activation functions $\tau_{\set{1}}^2, \tau_{\emptyset}^2$ for constraint $2$ defined over open neighborhoods of $(\rho,\dot{\rho}^-)$ and 
$\paren{\rho, \dot{\rho}^+_{\set{1}}}$ and similarly 
there exists $C^1$ time--to--activation functions $\tau_{\set{2}}^1,\tau_{\set{\emptyset}}^1$ for constraint $1$ defined over open neighborhoods of
$(\rho,\dot{\rho}^-)$ and 
$\paren{\rho, \dot{\rho}^+_{\set{2}}}$.
For each contact mode $I\subset\set{1,2}$ and constraint $j\in\set{1,2}$ undergoing activation ($j\not\in I$), we let $\vphi_I^j$ denote the flow--to--activation,
\eqnn{
\vphi_{I}^j(u,(\p,\dot{\p})) = \paren{u - \tau_{I}^j(\p,\dot{\p}), \phi_{I}(u - \tau_{I}^j(\p,\dot{\p}), (\p,\dot{\p}))};
}
since $\vphi_{I}^j$ is obtained via composition from $C^1$ functions, it is a $C^1$ function.
For 
$\omega_1 = \paren{\emptyset,\set{1},\set{1,2}}$, the function $\phi_{\omega_1}$ is given by the composition
\eqnn{
\phi_{\omega_1} = \phi_{\set{1,2}}\circ \Gamma_{\set{2}}\circ\vphi_{\set{1}}^2\circ\Gamma_{\set{1}}\circ\vphi_{\emptyset}^1;
}
for 
$\omega_2 = \paren{\emptyset,\set{2}, \set{1,2}}$, the function $\phi_{\omega_2}$ is given by the composition
\eqnn{
\phi_{\omega_2} = \phi_{\set{1,2}}\circ \Gamma_{\set{1}}\circ\vphi_{\set{2}}^1\circ\Gamma_{\set{2}}\circ\vphi_{\emptyset}^2.
}
Since both $\phi_{\omega_1}$ and $\phi_{\omega_2}$ are obtained via composition from $C^1$ functions, they are $C^1$ functions.
The generalization of this procedure to arbitrary contact mode sequences is given in~
\iftoggle{arxiv}
{
Sec.~\ref{sec:proof}.
}
{
\cite[Proof of Thm.~1]{PaceBurden2016arxiv}.
}
As illustrated in~\fig{hds}, the trajectory outcome near $\phi(t,(\q,\dot{\q}))\in TA_{\set{1,2}}$ is differentiable with respect to the initial condition near $(\q,\dot{q})\in TA_{\emptyset}$, even as the contact mode sequence changes from $\omega_1$ to $\omega_2$.
Formally, we can show that
$D\phi_{\omega_1}(t,(\q,\dot{\q})) = D\phi_{\omega_2}(t,(\q,\dot{\q}))$
by computing these derivatives via the Chain Rule; this entails taking products of matrices with the general form%
\footnote{For the definition of $h_j$, see~\eqref{eq:hi}; for the definition of $F_I$, see~\eqref{eq:vecfield}.}
\eqnn{
D\Gamma_{I}(u,(\p,\dot{\p}))  = \mat{ccc}{
	1 & 0 & 0 \\
	0 & \id & 0 \\
	0 & D_\p(\Delta_{I}(\p,\dot{p})\dot{\p}) & D_{\dot{p}}\paren{\Delta_{I}(\p,\dot{p})\dot{p}} },
}
\eqnn{
D\vphi_{I}^{j}(u,(\p,\dot{\p})) = \mat{cc}{
	1 & \frac{1}{Dh_{j}(\p,\dot{p})F_{I}(\p,\dot{p})}Dh_{j}(\p,\dot{p}) \\
	0 & \II-\frac{1}{Dh_{j}(\p,\dot{p})F_{I}(\p,\dot{\p})}F_{I}(\p,\dot{\p})Dh_{j}(\p,\dot{p})
}.
}
}

\begin{figure}[th]
	\vspace{.5em}
\centering
{
\iftoggle{ieee}
{
\def\svgwidth{1.5\columnwidth} 
\resizebox{\columnwidth}{!}{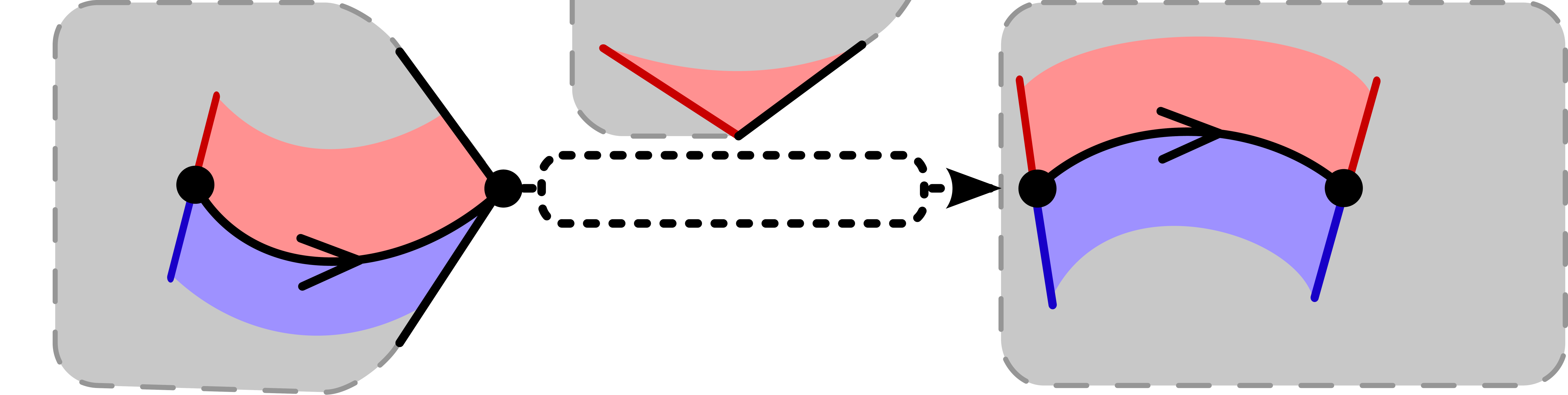}
}
{
\def\svgwidth{0.95\columnwidth} 
\resizebox{\columnwidth}{!}{\input{hds.pdf_tex}}
}
}
\caption{\label{fig:hds}
Illustration of trajectory undergoing two simultaneous
constraint activations:
the trajectory initialized at $(\q,\dot{\q})\in TA_{\emptyset}\subset TQ$
flows via~\eqref{eq:dyn:cont} to a point $(\rho,\dot{\rho}^-)\in TA_{\emptyset}$
where both 
constraint functions $a_1$, $a_2$ are zero,
instantaneously resets velocity via~\eqref{eq:dyn:disc} to $\dot{\rho}^+ = \Delta_{\set{1,2}}(\rho,\dot{\rho}^-)$,
then flows via~\eqref{eq:dyn:cont} to $\phi(t,(\q,\dot{\q}))\in TA_{\set{1,2}}\subset TQ$.
Nearby trajectories undergo activation and deactivation at distinct times: 
trajectories initialized in the red region activate constraint $1$ and flow through contact mode $TA_{\set{1}}$ before activating constraint $2$---their contact mode sequence is $\omega_1 = \paren{\emptyset,\set{1},\set{1,2}}$---while
trajectories initialized in the blue region activate $2$ and flow through $TA_{\set{1,2}}$ before deactivating $1$---their contact mode sequence is $\omega_2 = \paren{\emptyset,\set{2},\set{1,2}}$.
Differentiability of trajectory outcomes is illustrated by the fact that red outcomes lie along the same submanifold as blue.
}
\end{figure}


\section{Discussion}
\label{sec:disc}


We conclude by discussing implications and routes to generalizing the theoretical results reported above.

\subsection{Implications for optimization and learning}
\label{sec:opt}
Optimization and learning algorithms have emerged in recent years as powerful tools for synthesis of dynamic and dexterous robot behaviors~\cite{MombaurLongman2005,Todorov2011,KuindersmaDeits2015,LevineFinn2016,KumarTodorov2016}.
Since scalable algorithms leverage derivatives of trajectory outcomes, their applicability to the dynamics in~\eqref{eq:dyn} has previously
(i) been confined to a fixed contact mode sequence~\cite{MombaurLongman2005, MombaurBockTRO2005} 
or 
(ii) relied on approximations or relaxations of the dynamics~\cite{Todorov2011,KuindersmaDeits2015,LevineFinn2016,KumarTodorov2016}.
Neither of these approaches is entirely satisfying:
(i) prevents the algorithm from automatically selecting the behavior (corresponding to the contact mode sequence) in addition to extremizing its performance;
(ii) implies the model under consideration is no longer a mechanical system subject to unilateral constraints.
The results we report in~\sct{diff} provide an analytical and computational framework within which derivative--based algorithms can be rigorously and directly applied to the dynamics of mechanical systems subject to unilateral constraints~\eqref{eq:dyn} to select between permutations of constraint (de)activations.

\subsection{Decoupled limbs}
\label{sec:decoupled}
{\asmpdecoupled} can be interpreted physically as asserting that robot segments that can undergo impact simultaneously (i.e. limbs) must be decoupled through another segment not undergoing impact (i.e. the body).
Crucially, this condition is required to ensure trajectory outcomes vary continuously with respect to initial conditions~\cite[Thm.~20]{Ballard2000};
since continuity is a precondition for differentiability, this condition is equally necessary for the result reported in~{\thmdiff}.
We note that this condition is violated by conventional robots constructed from rigid serial chains and non--backdrivable actuators~\cite{MurrayLi1994}.
In contrast, design methodologies that incorporate direct--drive actuators~\cite{HyunSeok2014, KenneallyDe2016} or series compliance~\cite{SprowitzTuleu2013, OdhnerJentoft2014} tend to produce robot locomotors and manipulators with limbs that are (approximately) decoupled.
How approximately the limbs are decoupled is the determining factor on whether {\asmpdecoupled} holds, and hence
whether the trajectories are differentiable with respect to initial conditions away from (de)activations.

\subsection{Grazing contact}
\label{sec:grazing}


{\defadact} precludes \emph{grazing} trajectories, 
i.e. those that activate constraints with zero constraint velocity, or deactivate constraints with zero instantaneous rate of change in contact force.
The key technical challenge entailed by allowing constraint activation (resp. deactivation) we termed \emph{inadmissible} lies in the fact that the time--to--activation (resp. time--to--deactivation) function is not differentiable.
This fact 
has been shown by others~\cite[Ex.~2.7]{Di-BernardoBudd2008}, and 
is straightforward to see in an example.  
Indeed, consider the trajectory of a point mass moving vertically in a uniform gravitational field subject to a maximum height (i.e. ceiling) constraint.
The grazing trajectory is a parabola, whence the time--to--activation function involves a square root of the initial position.

\subsection{Zeno phenomena}
\label{sec:zeno}


{\defadact} precludes \emph{Zeno} trajectories, 
i.e. those that undergo an infinite number of constraint activations (hence, deactivations) in a finite time interval.
The key technical challenge entailed by allowing
Zeno
lies in the fact that
evaluating the flow requires composing an infinite number of flow--and--reset functions.
Composing a finite number of smooth functions yields a smooth function, but the same is not generally true for infinite compositions.
Thus although
it is possible to show that the infinite composition results in a differentiable flow in simple examples like the \emph{rocking block}~\cite{Housner1963} and \emph{bouncing ball}~\cite[Sec.~6.1]{Ballard2000}, 
we cannot at present draw any general conclusions regarding differentiability of the flow along Zeno trajectories.

\subsection{Friction}
\label{sec:friction}


Friction is a microscopic phenomenon that eludes first--principles understanding~\cite{GerdeMarder2001}.
Phenomenological models of friction are macroscopic approximations;
one popular model%
\footnote{Usually attributed to Coulomb, but also due to Antomons~\cite{GerdeMarder2001}.}
posits a transition from \emph{sticking} to \emph{sliding} when the ratio of normal to tangential force drops below a parameterized threshold.
The system's flow is discontinuous at this threshold, as some trajectories \emph{slide} away from their \emph{stuck} neighbors.
Even if such transitions are avoided, the introduction of simple friction models into mechanical systems subject to unilateral constraints is known to produce pathologies including nonexistence and nonuniqueness of trajectories~\cite{Stewart2000}.

\subsection{Non--Euclidean configuration spaces}
\label{sec:noneuc}


We restricted the configuration space to $Q = \R^d$ starting in~\sct{mdl} to simplify the exposition and lessen the notational overhead.
Nevertheless, the preceding results apply to configuration spaces that are complete Riemannian manifolds.%
\footnote{Since the preceding results are not stated in coordinate--invariant terms, they are formally applicable only after passing to coordinates.}

\subsection{Contact--dependent effort}
\label{sec:nonauto}


The dynamics in~\eqref{eq:dyn} vary with the contact mode $J\subset\set{1,\dots,n}$ due to intermittent activation of unilateral constraints $a_J(q) \ge 0$,
but the (so--called~\cite{Ballard2000}) effort map $f$ was not allowed to vary with the contact mode.
Contact--dependent effort can easily introduce nonexistence or nonuniqueness.
Indeed, consider a 
planar system with $q\in\R^2$ 
undergoing plastic impact with the constraint surface specified by $a(q) = q_1$ 
subject to contact--dependent effort that satisfies $f_\emptyset(q) = (-1,+1)$ if $q_1 > 0$ and $f_{\set{1}}(q) = (+1,-1)$ if $q_1 = 0$.
Every trajectory eventually activates the constraint.
Once the constraint is active, the trajectory becomes ill--defined.

\subsection{Massless limbs}
\label{sec:massless}


To accommodate massless limbs, one must specify their unconstrained dynamics.
If the unconstrained dynamics differ from the constrained dynamics, then in effect one has introduced contact--dependent effort, whence we refer to the preceding section.
If the unconstrained dynamics do not differ from the constrained dynamics, then in effect one has introduced bilateral constraints the massless limbs must satisfy, whence we refer to the subsequent section.
The constrained dynamics of massless limbs are derived in~\cite{Brogliato2015}.

\subsection{Bilateral constraints}
\label{sec:constraints}


The preceding results hold in the presence of bilateral (i.e. equality) constraints 
so long as they do not couple limbs.
Formally, if the bilateral constraints $b(q) = 0$ are specified by a differentiable function $b:Q\into\R^m$,
there must exist an assignment $\beta:\set{1,\dots,m}\into\set{1,\dots,n}$
such that 
for all 
bilateral constraints $k\in\set{1,\dots,m}$,
unilateral constraints $i,j\in\set{1,\dots,n}$, $i\ne j$, 
and configurations $q\in b^{-1}(0)\cap a_i^{-1}(0)\cap a_j^{-1}(0)$:
\eqnn{\label{eq:orthb}
\quad\ip{Da_i(q)}{Da_j(q)}_{M^{-1}} &= 0, \\
\quad\ip{Db_{\beta(i)}(q)}{Da_j(q)}_{M^{-1}} &= 0.
}

\subsection{Non--autonomous dynamics}
\label{sec:nonauto}


One may wish to allow the continuous and/or discrete dynamics in~\eqref{eq:dyn} to vary with time or an external input. 
Some common cases can easily be handled.
If the dynamics are time--varying, but time could be incorporated as a state variable so that the preceding assumptions hold for the augmented system determined by $\td{q} = (t,q)\in \td{Q} = \R\times Q$,
\eqnn{
\td{M}\paren{\td{q}} = \diag\paren{1, M(q)},\ \td{f}\paren{\td{q},\dot{\td{q}}} = (0,f(t,q,\dot{q})),
}
then the preceding results apply directly to the augmented system;
a similar observation holds when the value of an external input is determined by time and state in such a way that the closed--loop system (possibly augmented as above to remove the time dependence) satisfied the preceding assumptions.

\iftoggle{ieee}{
}
{

}

\iftoggle{arxiv}{
\section{Appendix: Proof of \thmdiff }
\label{sec:proof}

\diffcontact*

\begin{proof}
We begin with an apology to the reader. 
The notation used in this proof is nonstandard; it is our hope that though it is nonstandard
the notation clairifies the steps more than it confuses the reader.

Before begining with the proof we introduce some notation:
\begin{enumerate}
\item $x[i]$ denotes the $i$th entry into variable $x$;
	
	
\item $q$ is the vector containing all limb positions. $q = \left[q[i]\right]_{i=0}^n \in Q = \Pi_{i=0}^n Q_i$; 
\item similarly $\dot{q}$ is the vector containing all limb velocities. $\dot{q} = \left[\dot{q}[i]\right]_{i=0}^n$. $(q,\dq)\in TQ$;
\item $\bar{a}_i\colon Q\into~\R$ and $h_i\colon TQ \into~\R$
	where 
	\begin{align}
		\forall \q \in Q &\qquad \bar{a}_i(q)=a_i(q_i),\\
		\forall (\q,\dot{\q}) \in TQ &\qquad h_i(q,\dot{q}) = a_i(q_i), \label{eq:hi}
	\end{align}
	the logical extensions of $a_i$ to the corresponding domains;
\item let $\Box_J\colon TQ \into \R^{d\times d}$ where $\Box_J(q,\dot{q}) = D_q(\Delta_J(q, \dot{q})\dot{q})$, 
	the derivative of the post-impact velocity with respect to position;
\item let $\Diamond_J\colon TQ \into \R^{d\times d}$ where $\Diamond_J(q,\dot{q}) = D_{\dot{q}}(\Delta_J(q, \dot{q})\dot{q})$, 
	the derivative of the post-impact velocity with respect to position;%
	\footnote{Given the coefficient of restitution $\gamma$ is dependent upon $\dot{q}$, $\Diamond_J(q,\dot{q})$ might differ from $\Delta_J(q,\dot{q})$.}
\item $\mat{c}{q[i] \\ \dot{q}[i]}_{i=0}^n=\mat{c}{q[0]^T, q[1]^T, \cdots, q[n]^T, \dot{q}[0]^T, \cdots, \dot{q}[n]^T}^T$ is the deinterleaving of the position and velocity components of the individual limbs into a vector where the first half corresponds to position components and the later half corresponds to velocity values.
\end{enumerate}
What follows are some helpful identies based upon the above notation and the given assumptions.
\begin{enumerate}
\item Given orthogonality of constraints:%
\footnote{ Properties (a)-(c) are included here for completeness and (d)-(e) may also be seen more succinctly 
		using the limb decoupling assumption. }
\begin{enumerate}
	\item the reset map has a block diagonal form with 
		$\Delta^j_J\colon TQ_j \into \R^{d_j\times d_j}$, the reset map for limb $j$ with $J$ constraints active:
		\eqnn{
			\Delta_J(q[j], \dot{q}[j]) = \diag\paren{\Delta_J^{{j}}(q[j],\dot{q}[j])}_{j=0}^n, }
		where
		\eqnn{
	\Delta&^{{j}}_J(q[j],\dot{q}[j]) \\
	&= I_{d_j} - \paren{1+\gamma_j(q[j], \dot{q}[j])}M_j(q[j])^{-1}Da_j^T(q[j])\paren{Da_j(q[j]) M(q[j])^{-1} Da_j^T(q[j])}Da_j(q[j]);
		}

	\item the velocity of a given limb is not affected by a reset if the constraint corresponding to that limb is not active:
		\eqnn{\Delta_J^{k}(q[k], \dot{q}[k]) = I_{d_k} \text{ if }k \notin J;}
	\item the reset map for constraint mode $J$ is equal to the product of the reset maps for each active constraint:
		\eqnn{\label{eq:deltaprod}\prod_{j\in J } \Delta_{\set{j}}(q, \dot{q}) = \Delta_J(q, \dot{q});}

	\item $\Box_J(q,\dot{q})$ has a block diagonal structure with 
		$\Box_J^j\colon TQ_j\into \R^{d_j\times d_j}$:
		\eqnn{ \Box_J(q,\dot{q}) = \diag\paren{\Box_J^{{j}}(q[j], \dot{q}[j])}_{j=0}^n,}
		where
		\eqnn{
		\Box_J^j(q[j],\dot{q}[j]) = D_{q_j}\paren{\Delta_J^j(q[j],\dot{q}[j]}\dot{q}[j]);
	}
\item if a given limb is not in the active constraint set, the block corresponding to the limb in $\Box_J$ is zero:
	\eqnn{\Box_J^{{k}}(q[k],\dot{q}[k]) = 0 \text{ if }k \notin J.}
\item $\Box_J(q,\dot{q})$ is the summation of $\Box_{\set{j}}$ for each active constraint:
	\eqnn{\label{eq:boxsum}\sum_{j\in J} \Box_{\set{j}}(q, \dot{q}) = \Box_J(q,\dot{q}).}
\end{enumerate}
\item Given limb decoupling: 
	\begin{enumerate}
	\item the acceleration of each limb is only dependent upon the given limb's and the body's current position and velocity; $\alpha_J \colon TQ_J \into \R^d$:
\eqnn{
	\label{eq:accel}
	\alpha_J(q,\dot{q})[j]
	= \begin{cases}
		 M_j^{-1}(q[j])\paren{f_j(q[0],\dot{q}[0], q[j], \dot{q}[j])+c_j(q[j],\dot{q}[j])\dot{q}} & \text{ if $j\notin J$ and $j\neq 0$} \\ 
		\begin{aligned} 
		 M_j^{-1}(q[j])\left(f_j(q[0],\dot{q}[0], q[j], \dot{q}[j])+c_j(q[j],\dot{q}[j])\dot{q} + \right .\\
		 \left. Da_j(q[j])^T\lambda_j(q[j], \dot{q}[j]) \right) 
	  \end{aligned} & \text{ if $j \in J$ } \\
		 M_0^{-1}(q[0])\paren{f_0(q,\dot{q})+c_0(q[0], \dot{q}[0])\dot{q}[0]} & \text{ if $j=0$;}
	\end{cases}
}

\item $\Diamond_J(q,\dot{q})$ has a block diagonal form with $\Diamond_J^i\colon TQ_i \rightarrow \R^{d_j\times d_j}$:
\begin{align}
	\Diamond_J^j(q[j],\dot{q}[j]) &= D_{\dot{q}[j]}\Delta_J^j(q[j],\dot{q}[j])\dot{q}[j] \\
	D_{\dq}(\Delta_J(q,\dot{q})\dot{q}) &= D_{\dot{q}}\left(\diag[\Delta_J^j(q[j],\dq[j])]_{j=0}^n\right), \\
		&= \diag\left[D_{\dq[j]}\Delta_J^j(q[j],\dq[j])\dq[j]\right]_{j=0}^n \\
		&= \diag[ \Diamond_J^j(q[j],\dq[j])]_{j=0}^n \label{eq:diagDiamond};
\end{align}
\item if a given limb is not in the active constraint set, the corresponding block in $\Diamond_J$ is the identity:
\eqnn{
	\Diamond_J^k(q[k],\dq[k]) = I_{d_k} \text{ if }k \notin J;
}
\item $\Diamond_J(q,\dot{q})$ is the product of $\Diamond_{\set{j}}$ for each active constraint:
	\eqnn{\label{eq:boxsum}\prod_{j\in J} \Diamond_{\set{j}}(q, \dot{q}) = \Diamond_J(q,\dot{q}).}

\end{enumerate}
\end{enumerate}

We now proceed with the proof.
\begin{enumerate}

	\item We repeat some of the notations found in the proof of~\cite[Thm.~1]{PaceBurden2017hscc} here.
		\begin{enumerate}

\item For any given perturbation, there is a finite set of selection functions corresponding to a sequence of (de)activating constraints. 
\item These selection functions will be indexed by a pair of functions $(\omega,\eta)$ where: 
$\omega\colon \set{0,\dots,m}\into 2^n$ is a contact mode sequence, 
i.e. $\omega\in\Omega$;
$\eta\colon \set{0,\dots,m-1}\into\set{1,\dots,n}$ indexes constraints that undergo admissible activation or deactivation.
\item Let $\mu \colon \set{0,\dots,m}\into 2^n$ be defined as 
$\mu(k) = \bigcup_{i=0}^{k-1}\set{\eta(i)}$, where we adopt the convention that $\bigcup_{i=0}^{-1}\set{i} = \emptyset$;
note that $\mu$ is uniquely determined by $\eta$.
As in~\cite[Thm.~1]{PaceBurden2017hscc}, we suppress notation indicating dependence on $\w$ and $\eta$ until \eqref{eq:salt_final}.
\item Let $(\rho,\dot{\rho}^-)=\lim_{u\uparrow s}\phi(u,(\q,\dot{\q}))$.
For all $k\in\set{0,\dots,m}$, define $\dot{\rho}_{k} = \Delta_{\mu(k)}(\rho)\dot{\rho}^-$,
 once again using the convention $\bigcup_{i=0}^{-1} = \emptyset$.
\end{enumerate}
\item Since assumptions 1--4 from~\sct{asmp}\ satisfy the hypotheses of~\cite[Thm.~1]{PaceBurden2017hscc},%
	\footnote{\asmpdecoupled is stronger than the orthoganality of constraints assumption.}
	$\phi$ is piecewise--differentiable.

\item Let $K_a$ be the set of constraints undergoing activation and $K_d$ be the set of constraints undergoing deactivation.\footnote{Activating constraints may instantaneously deactivate, e.g. a bean bag hitting a ceiling.}
\item Assume, without loss of generality, the trajectory begins in the unconstrainted mode $\emptyset$.%
\footnote{This does not imply $K_d=\emptyset$.}

\item Let $R_J$ denote the reset map into constraint mode $J$;
	$R_J\colon TQ \into TQ_J$, 
	\eqnn{\forall(q,\dot{q})\in TQ\colon\qquad R_J(q,\dot{q})= \mat{c}{q \\ \Delta_J(q, \dot{q})\dot{q}}.}
\item Let $DR_J$ denote the (Jacobian) derivative of $R_J$ with respect to $(q,\dot{q})$.
	\eqnn{
		\forall (q,\dot{q}) \in TQ: \qquad
		DR_J(q,\dot{q}) = \mat{cc}{ \I & 0 \\ \Box_J(q,\dot{q}) & \Diamond_J(q, \dot{q}) }.}
\item Let $\widetilde{R}_J^L$ denote the reset map from constraint mode $J$ into constraint mode 
	$J\backslash L$; $\widetilde{R}_J^L\colon TQ_J~\rightarrow~TQ_{J\backslash L}$,
	\eqnn{
		\forall(q,\dot{q}) \in TQ_J: \qquad
		\widetilde{R}_J^L(q,\dot{q}) = \begin{bmatrix} q \\ \dot{q} \end{bmatrix}.
	}
\item Let $\hat{R}_\ell  \colon TQ \into TQ,$ 
	\eqnn{
		\forall(q,\dot{q}) \in TQ:
		\qquad \hat{R}_\ell(q,\dot{q}) = \begin{cases} R_{\set{\eta(\ell)}}(q,\dot{q}) & \text{ if } \eta(\ell) \in K_a 
		\vspace{1mm}\\
		\widetilde{R}_{\omega(\ell)}^{\set{\eta(\ell)}}(q,\dot{q}) & \text{ if } \eta(\ell) \in K_d.
	\end{cases}
	}
\item Let $F_J$ denote the vector field in constraint mode $J$. $F_J\colon TQ_J \into \R^{2d}$, 
	\eqnn{
		\label{eq:vecfield}
		\forall(q,\dot{q}) \in TQ_J:\qquad F_J(q,\dot{q})=\mat{c}{\dot{q}\\ \alpha_J(q,\dot{q})},}
where $\alpha_J(q,\dot{q})$ is defined in~\eqref{eq:accel}.
\newcommand{\rhodotm}{\dot{\rho}_{\ell}}
\newcommand{\rhodotp}{\dot{\rho}_{(\ell+1)}}
\item Let 
\eqnn{
	\label{eq:S_l}
	S_\ell =\\ 
		&\frac{1}{Dh_\ell(\rho,\rhodotm) F_{\ell}(\rho,\rhodotm)}
	\paren{F_{\ell+1}(\rho,\rhodotp)-
		D\hat{R}_{\ell}(\rho,\rhodotm)F_{\ell}(\rho,\rhodotm)} 
	Dh_{\ell}(\rho,\rhodotm),
}
where $F_\ell = F_{\w(\ell)}$, 
and $Dh_\ell = Dh_{\eta(\ell)}$; $S_\ell \in \R^{2d\times 2d}$.
\item The saltation matrix for a given word $\w$ at $(\rho, \dot{\rho})$ is 
\eqnn{
	\label{eq:salt_general}
	\Xi_\w^\eta = 
		\prod_{\ell=0}^{\abs{K_a}+\abs{K_d}-1}\left(D\hat{R}_{\w(\ell)\bigcup \set{\eta(\ell)}}(\rho,\rhodotm) + 
			S_\ell\right)
}
where $S_\ell$ is defined in~\eqref{eq:S_l} \cite[Eq.~66]{BurdenSastry2016siads}, \cite[Eq.~2.5]{Ivanov1998}.

\item Given that the vector field associated with a constraint undergoing deactivation is continuous, the corresponding saltation matrix is $\II$. 
That is $S_\ell = 0$ and $D\hat{R}_\ell(\rho,\dot{\rho}_{(\ell-1)}) = \II$ when $\eta(\ell)$ is a deactivation; 
$\eta(\ell) \in K_d$.
In what follows, the calculations are performed only for activating constraints.\footnote{This exclusion does not apply to the case of deactivations caused by an activation, e.g. a bouncing ball or a bean bag hitting the ceiling.}

%

\item The inner product $Dh_{\ell}(\rho,\rhodotm) F_{\ell}(\rho,\rhodotm)$ in the computation of $S_\ell$ is independent of the word~$\w$.
	\eqnn{\label{eq:salt_scaling}
		Dh_{\ell}(\rho,\rhodotm) F_{\ell}(\rho,\rhodotm) 
		= D\bar{a}_{\eta(\ell)}(\rho)\rhodotm 
		= Da_{\eta(\ell)}\paren{\rho[\eta(\ell)]}\dot{\rho}[\eta(\ell)].}

\item The reset map into contact mode $\w(\ell)\bigcup \set{\eta(\ell)}$ from contact mode $\w(\ell)$ 
is the same as the reset map into contact mode $\{\eta(\ell)\}$ from contact mode $\w(\ell)$.%
	\footnote{It is important to note that it is not always the case $\w(\ell+1)=\w(\ell)\bigcup\set{\eta(\ell)}$ as in the case of instantaneous constraint deactivation dependent upon a constraint activation. 
In this case $\w(\ell+1) = \w(\ell)$ and $\eta(\ell) \neq \emptyset$.}
	\begin{align}
		\forall (q, \dot{q}) \in TQ_{\w(\ell)}: \qquad R_{\w(\ell)\bigcup\set{\eta(\ell)}}(q,\dot{q}_\ell) &= \mat{c}{q \\ \Delta_{\w(\ell)\bigcup\set{\eta(\ell)}}(q, \dot{q}_\ell)\dot{q}_\ell} \\
		&= \mat{c}{q \\ \Delta_{\set{\eta(\ell)}}(q, \dot{q}_\ell)\dot{q}_\ell} \\
		&=R_{\{\eta(\ell)\}}(q,\dot{q}_\ell).
	\end{align}

\item From the chain rule for total derivatives \cite[Prop~C.3]{Lee2012} and \asmpdecoupled,
the first order approximation of the reset map into constraint mode $J$ is the same as the product of the reset maps into constraint mode $\set{j}\in J$ at a given point $(q,\dot{q}$).
	\eqnn{ \label{eq:DRI}
		DR_J(q,\dot{q})= D\prod_{j\in J}R_{\{j\}}(q,\dot{q}) = \left(\prod_{j\in J}DR_{\{j\}}\right)(q,\dot{q}).}
\item  Given \eqref{eq:DRI} and identities \eqref{eq:boxsum} and \eqref{eq:diagDiamond},
\eqnn{
	DR_J(q,\dot{q}) = \mat{cc}{\I & 0 \\ \sum_{j\in J}\Box_{\{j\}}(q) & \prod_{j\in J} \Diamond_{\{j\}}(q,\dot{q}) }.
}
\item The saltation matrix equation \eqref{eq:salt_general} can then be written as
\eqnn{
	\label{eq:salt_reduced_reset}
	\Xi_\w^\eta = 
		\prod_{\ell=0}^{\abs{K_a}+\abs{K_d}-1}\left(D\hat{R}_{\ell}(\rho,\rhodotm) + S_\ell\right).
}

\item By computing the acceleration of each limb using \eqref{eq:accel} and $\Diamond_{\eta(\ell)}$ has a block
	diagonal structure given by \eqref{eq:diagDiamond}, clearly
\eqnn{ 
	\label{eq:accel_diff}
	\alpha_{w(\ell+1)}(\rho, \rhodotp) - \Diamond_{\eta(\ell)}(\rho, \rhodotm )\alpha_{\w(\ell)}(\rho, \rhodotm)  \\
&\hspace{-50mm}=\begin{cases}
			\alpha_{\set{\eta(\ell)}}(\rho, \Delta_{\set{\eta(\ell)}}(\rho,\dot{\rho})\dot{\rho})-
			\Diamond_{\set{\eta(\ell)}}(\rho, \dot{\rho}) \alpha_\emptyset(\rho,\dot{\rho}) & \text{if $\w(\ell+1)=\w(\ell)\bigcup \set{\eta(\ell)}$}\\
			\alpha_\emptyset(\rho, \Delta_{\set{\eta(\ell)}}(\rho,\dot{\rho})\dot{\rho})-
			\Diamond_{\set{\eta(\ell)}}(\rho, \dot{\rho}) \alpha_\emptyset(\rho,\dot{\rho}) & \text{if $\w(\ell+1)=\w(\ell)\bigcup \set{\eta(\ell)}$}.
		
	\end{cases}
}

\item From \eqref{eq:DRI}, the vector field difference in the calcuation $S_\ell$ for only an activation reduces to 
	
	\eqnn{
		F_{(\ell+1)}&(\rho,\rhodotp)-DR_{\w(\ell)\bigcup\set{\eta(\ell)}}(\rho,\rhodotm)F_{\ell}(\rho,\rhodotm)  \\
		&=F_{\ell+1}(\rho,\rhodotp)-DR_{\ell}(\rho,\rhodotm)F_{\ell}(\rho,\rhodotm) \\
		&=	\left[ \mat{c}{\rhodotp[j] \\ \alpha_{\w(\ell+1)}(\rho,\rhodotp)[j]} - 
			\mat{c c}{ \I & 0 \\ \Box_{\set{\eta(\ell)}}^j(\rho,\rhodotm) & 
				\Diamond_{\set{\eta(\ell)}}^j(\rho, \rhodotm)} 
			\mat{c}{\rhodotm[j] \\ \alpha_{\w(\ell)}(\rho, \rhodotm)[j]} \right]_{j=0}^n ,}
where
\eqnn{
	\label{eq:simp_act_diff}
		&\mat{c}{\rhodotp[j] \\ \alpha_{\w(\ell+1)}(\rho,\dot{\rho}_\ell)[j]} - 
			\mat{c c}{ \I & 0 \\ \Box_{\set{\eta(\ell)}}^j(\rho,\rhodotm) & 
				\Diamond_{\set{\eta(\ell)}}^j(\rho, \rhodotm)} 
			\mat{c}{\rhodotm[j] \\ \alpha_{\w(\ell)}(\rho, \rhodotm)[j]} \\
	&=
    \begin{cases}
				\mat{c}{0\\0} & \parbox[t]{.2\columnwidth}{if $j\neq \eta(\ell)$  and\\ $j\neq 0$} \\ 
				\mat{c}{0\\ \alpha_{\w(\ell+1)}(\rho,\rhodotp)[0]-\alpha_{\w(\ell)}(\rho, \rhodotm)[0]} & \text{ if $j=0$} \\
				\mat{c}{\rhodotp[j] - \rhodotm[j] \\ \alpha_{\w(\ell+1)}(\rho, \rhodotp)[j]-\Box_{\set{\eta(\ell)}}^{j}(\rho,\dot{\rho})\rhodotm[j] -
					\Diamond_{\set{\eta(\ell)}}^{j}(\rho, \rhodotm)\alpha_{\w(\ell)}(\rho, \rhodotm)[j]}  & \text{ if $j=\eta(\ell)$}
	\end{cases}\\
	&=
	\begin{cases}
				\mat{c}{0\\0} & \parbox[t]{.2\columnwidth}{if $j\neq \eta(\ell)$  and\\ $j\neq 0$} \\
				\mat{c}{0\\ \alpha_{\set{\eta(\ell)}}\paren{\rho, \Delta_{\set{\eta(\ell)}}(\rho,\dot{\rho})\dot{\rho}}[0]-\alpha_{\emptyset}(\rho, \dot{\rho})[0]}& \text{ if $j=0$} \\
				\mat{c}{\paren{\Delta_{\set{\eta(\ell)}}(\rho,\dot{\rho})\dot{\rho}}[j] - \dot{\rho}[j] \\ 
					\alpha_{\set{\eta(\ell)}}\paren{\rho, \Delta_{\set{\eta(\ell)}}(\rho,\dot{\rho})\dot{\rho}}[j]-
			\Box_{\set{\eta(\ell)}}^{j}(\rho,\dot{\rho})\dot{\rho}[j]
			-\Diamond_{\set{\eta(\ell)}}^{j}(\rho, \dot{\rho})\alpha_\emptyset(\rho,\dot{\rho})[j]}  & \text{ if $j=\eta(\ell)$}.
	\end{cases} }
The equality in the last step for the $j=\eta(\ell)$ case can be seen from the block diagonal structure of $\Box_{\eta(\ell)}$ and \eqref{eq:accel_diff}.
Thus, for the case of only an activation,
\eqnn{
	\label{eq:salt_vecfield_diff}
		F_{(\ell+1)}(\rho,\rhodotp)-&DR_{\w(\ell)\bigcup\set{\eta(\ell)}}(\rho,\rhodotm)F_{\ell}(\rho,\rhodotm) 
		=F_{\set{\eta(\ell)}}(\rho,\dot{\rho}_{\eta(\ell)})-DR_{\set{\eta(\ell)}}(\rho,\dot{\rho}_\emptyset)F_{\emptyset}(\rho,\dot{\rho}_\emptyset).
	}
	Clearly, it can be shown by algebraic manipulation similar to \eqref{eq:simp_act_diff}, the equality 
\eqnn{
	\label{eq:salt_vecfield_diff_bounce}
		F_{\ell+1}(\rho,\rhodotp)-&DR_{\w(\ell)\bigcup\set{\eta(\ell)}}(\rho,\rhodotm)F_{\ell}(\rho,\rhodotm) 
		=F_{\emptyset}(\rho,\dot{\rho})-DR_{\set{\eta(\ell)}}(\rho,\dot{\rho}_\emptyset)F_{\emptyset}(\rho,\dot{\rho}_\emptyset)
	}

holds for the case of an activation instantly causing a deactivation.

\item The saltation matrix from \eqref{eq:salt_reduced_reset} can be further simplified using the 
	independence of the inner product in $S_\ell$ \eqref{eq:salt_scaling}, 
	along with the flow differences \eqref{eq:salt_vecfield_diff} and \eqref{eq:salt_vecfield_diff_bounce} to
\eqnn{ \label{eq:salt_reduced}
	\Xi_\w^\eta=
		\prod_{\ell=0}^{\abs{K_a}+\abs{K_d}-1}\left(DR_{\ell}(\rho,\dot{\rho}) + \widetilde{S}_\ell\right),
}
where 
\eqnn{
	\widetilde{S}_\ell = \frac{1}{Da_{\eta(\ell)}(\rho[\eta(\ell)])\dot{\rho}[\eta(\ell)]} 
	\paren{ F_{\w(\ell+1) \backslash \w(\ell)}(\rho, \dot{\rho}) - DR_\ell(\rho,\dot{\rho}) F_\emptyset(\rho,\dot{\rho})}
	Dh_{\eta(\ell)}(\rho, \dot{\rho}).
}

\item Given the constraints are only dependent upon position, clearly
\eqnn{\label{eq:salt_SDR}
		\widetilde{S}_jDR_{\{i\}}(\rho,\dot{\rho}) = \widetilde{S}_j }
for all $i,j\in \{0,\dots,\abs{K}-1\}$.
\item Next we show that 
\eqnn{\label{eq:salt_DRS}
		DR_{\{j\}}(\rho,\dot{\rho})\widetilde{S}_i&= \widetilde{S}_i
	}
for $j \neq i$. Given the block structure of the corresponding matrices, we make the following observations:
\begin{enumerate}
	\item only the columns associated with the indices for $q_i$ are nonzero in $\widetilde{S}_i$;
	\item only the rows associated with $q_0$ and $q_i$ are nonzero in $\widetilde{S}_i$;
	\item since $j\neq i$, $\Box^i_{\set{j}}(\rho,\dot{\rho})=0$ and $\Diamond_{\set{j}}^i(\rho, \dot{\rho})=I_{d_i}$.
\end{enumerate}
The nonzero elements of $S^i$ are thus multiplied by an identity like matrix.

\item Given \eqref{eq:salt_SDR} and \eqref{eq:salt_DRS}, the saltation matrix \eqref{eq:salt_reduced} 
	contains the matrix product $\widetilde{S}_\ell \widetilde{S}_k$, 
	where $\ell > k$ and $k,\ell\in \set{0, \dots, \abs{K}-1}$. 
	Within this matrix product, lies the inner product
	\eqnn{
		Dh_\ell(\rho,\dot{\rho})
		\paren{F_{\w(k+1) \backslash \w(k)}(\rho,\dot{\rho})-DR_{k}(\rho,\dot{\rho})F_{\emptyset}(\rho,\dot{\rho})} 
			&=Da_\ell(\rho[\ell])\dot{\rho}[\ell] - Da_\ell(\rho)\dot{\rho}[\ell]\\
			&=0.
	}
	Hence 
\eqnn{\label{eq:salt_crossterm}\widetilde{S}_\ell \widetilde{S}_g=0.}
\item Given \eqref{eq:salt_SDR},\eqref{eq:salt_DRS}, and \eqref{eq:salt_crossterm}, the saltation matrix \eqref{eq:salt_reduced} can be written as
\eqnn{
	\label{eq:salt_final}
	\Xi_\omega^\eta(\rho,\dot{\rho}) = 
		DR_{K_a}(\rho,\dot{\rho}) + \sum_{k\in K_a}\widetilde{S}_k.
}
\item The Bouligand-derivative \cite[Chpt.~3]{Scholtes2012} of $\phi_t(q,\dot{q})$ in direction $(v,\dot{v})\in TQ$ is given by
	\eqnn{
		\label{eq:b-deriv}
		D\phi_t(q,\dot{q}; v, \dot{v}) = D\phi_{t-s}(\rho, \dot{\rho}) \Xi_\omega^\eta(\rho,\dot{\rho}) D\phi_s(q,\dot{q})\begin{bmatrix} v \\ \dot{v} \end{bmatrix},
	}
	where $s\in \R$ is the time of simultaneous activation and/or deactivation of constraints, and $\eta,\omega$ are determined by $(v,\dot{v})$ \cite[Eq.~65]{BurdenSastry2016siads}.
\item As the saltation matrix calculation \eqref{eq:salt_final} is independent of the word $\w$,
	\eqref{eq:b-deriv} can be rewritten as
	\eqnn{
		D\phi_t(q,\dot{q}; v, \dot{v}) = D\phi_{t-s}(\rho, \dot{\rho}) \paren{DR_{K_a}(\rho,\dot{\rho}) + \sum_{k\in K_a}\widetilde{S}_k }D\phi_s(q,\dot{q})\begin{bmatrix} v \\ \dot{v} \end{bmatrix}.
	}
	Then $D\phi_t(q,\dot{q}; .)$ is a linear function and the flow is classically differentiable \cite[Chpt.~3]{Scholtes2012}.
\end{enumerate}

\end{proof}
}
{}

\iftoggle{ieee}{
\renewcommand*{\bibfont}{\small}
}
{
\pagebreak
\hypersetup{linkcolor=blue}
}
\printbibliography

\end{document}